\DeclareRobustCommand{\hlyellow}[1]{{\sethlcolor{yellow}\hl{#1}}}
\newcommand{\powerset}{\raisebox{.15\baselineskip}{\Large\ensuremath{\wp}}}  
\newcommand{\Explain}{\ensuremath{\mathcal{E}}}
\newcommand{\Prob}{\ensuremath{\mathbb{P}}}
\newcommand{\Exp}{\ensuremath{\mathbb{E}}}
\newcommand{\ExpModel}{\ensuremath{\Exp_m}}
\newcommand{\real}{\ensuremath{\mathbb{R}}}
\newcommand{\ProbModel}{\ensuremath{\Prob_m}}
\newcommand{\ProbX}{\ensuremath{\Prob_X}}
\newcommand{\defn}{\ensuremath{: \, = }}
\newcommand{\Qprob}{\ensuremath{\mathbb{Q}}}
\newcommand{\Qfamily}{\ensuremath{\mathcal{Q}}}
\newtheoremstyle{slplain}
  {.5\baselineskip\@plus.2\baselineskip\@minus.2\baselineskip}
  {.5\baselineskip\@plus.2\baselineskip\@minus.2\baselineskip}
  {\slshape}
  {}
  {\bfseries}
  {.}
  { }
  {}
\theoremstyle{slplain}
\newtheorem{theorem}{Theorem}
\icmltitlerunning{Learning to Explain: An Information-Theoretic Perspective on
Model Interpretation}
\begin{document}

\setlength{\abovedisplayskip}{4pt}
\setlength{\belowdisplayskip}{4pt}

\twocolumn[
\icmltitle{Learning to Explain: An Information-Theoretic Perspective \\ on
Model Interpretation}



\icmlsetsymbol{equal}{*}

\begin{icmlauthorlist}
\icmlauthor{Jianbo Chen}{ucb,af1}
\icmlauthor{Le Song}{git,af}
\icmlauthor{Martin J. Wainwright}{ucb}
\icmlauthor{Michael I. Jordan}{ucb} 
\end{icmlauthorlist}

\icmlaffiliation{ucb}{University of California, Berkeley}
\icmlaffiliation{git}{Georgia Institute of Technology}
\icmlaffiliation{af}{Ant Financial}
\icmlaffiliation{af1}{Work done partially during an internship at Ant Financial}

\icmlcorrespondingauthor{Jianbo Chen}{jianbochen@berkeley.edu} 

\icmlkeywords{Model Interpretation}

\vskip 0.3in
]



\printAffiliationsAndNotice{}  

\begin{abstract}
We introduce \emph{instancewise feature selection} as a methodology for model interpretation.  Our method is based on learning a function to extract a subset of features that are most informative for each given example.  This feature selector is trained to maximize the mutual information between selected features and the response
variable, where the conditional distribution of the response variable given the input is the model to be explained. We develop an efficient variational approximation to the mutual information, and show the effectiveness of our method on a variety of synthetic and real data sets using both quantitative metrics and human evaluation.

\end{abstract}

\section{Introduction}  

\setlength{\abovedisplayskip}{3pt}
\setlength{\abovedisplayshortskip}{1pt}
\setlength{\belowdisplayskip}{3pt}
\setlength{\belowdisplayshortskip}{1pt}
\setlength{\jot}{3pt}
\setlength{\textfloatsep}{3pt}  







Interpretability is an extremely important criterion when a machine
learning model is applied in areas such as medicine, financial
markets, and criminal justice (e.g., see the discussion paper by
Lipton~(\cite{lipton2016mythos}), as well as references therein). Many
complex models, such as random forests, kernel methods, and deep
neural networks, have been developed and employed to optimize
prediction accuracy, which can compromise their ease of
interpretation.

In this paper, we focus on \textit{instancewise feature selection} as
a specific approach for model interpretation. Given a machine learning
model, instancewise feature selection asks for the importance scores
of each feature on the prediction of a given instance, and the
relative importance of each feature are allowed to vary across
instances. Thus, the importance scores can act as an explanation for
the specific instance, indicating which features are the key for the
model to make its prediction on that instance. A related concept in
machine learning is feature selection, which selects a subset of
features that are useful to build a good predictor for a specified
response variable~\cite{guyon2003introduction}. While feature
selection produces a global importance of features with respect to the
entire labeled data set, instancewise feature selection measures
feature importance locally for each instance labeled by the model.

Existing work on interpreting models approach the problem from two
directions. The first line of work computes the gradient of the output
of the correct class with respect to the input vector for the given
model, and uses it as a saliency map for masking the
input~\cite{simonyan2013deep,Springenberg2014Striving}. The gradient
is computed using a Parzen window approximation of the original
classifier if the original one is not
available~\cite{baehrens2010explain}.
Another line of research approximates the model to be interpreted via
a locally additive model in order to explain the difference between
the model output and some ``reference'' output in terms of the
difference between the input and some ``reference''
input~\cite{bach2015pixel,DBLP:journals/corr/KindermansSMD16,ribeiro2016should,lundberg2017unified,shrikumar2016not}. \citet{ribeiro2016should}
proposed the LIME, methods which randomly draws instances from a
density centered at the sample to be explained, and fits a sparse
linear model to predict the model outputs for these instances.
\citet{shrikumar2016not} presented DeepLIFT, a method designed
specifically for neural networks, which decomposes the output of a
neural network on a specific input by backpropagating the contribution
back to every feature of the input. \citet{lundberg2017unified} used
Shapley values to quantify the importance of features of a given
input, and proposed a sampling based method ``kernel SHAP'' for
approximating Shapley values. Essentially, the two directions both
approximate the model locally via an additive model, with different
definitions of locality. While the first one considers infinitesimal
regions on the decision surface and takes the first-order term in the
Taylor expansion as the additive model, the second one considers the
finite difference between an input vector and a reference vector.

In this paper, our approach to instancewise feature selection is via
mutual information, a conceptually different perspective from existing
approaches. We define an ``explainer,'' or instancewise feature
selector, as a model which returns a distribution over the subset of
features given the input vector.
For a given instance, an ideal explainer should assign the highest
probability to the subset of features that are most informative for
the associated model response. This motivates us to maximize the
mutual information between the selected subset of features and the
response variable with respect to the instancewise feature
selector. Direct estimation of mutual information and discrete feature
subset sampling are intractable; accordingly, we derive a tractable
method by first applying a variational lower bound for mutual
information, and then developing a continuous reparametrization of the
sampling distribution.

\begin{table}[t]
\centering
\resizebox{0.9\linewidth}{!}{
\begin{tabular}{c|c|c|c|c|}
& Training & Efficiency & Additive & Model-agnostic\\ 
  LIME \cite{ribeiro2016should} & No & Low & Yes & Yes\\ 
  Kernel SHAP \cite{lundberg2017unified} & No & Low & Yes & Yes\\
  DeepLIFT \cite{shrikumar2016not} & No & High& Yes & No\\ 
  Salient map \cite{simonyan2013deep}& No & High& Yes & No\\ 
  Parzen \cite{baehrens2010explain} & Yes & High & Yes & Yes\\ 
  LRP \cite{bach2015pixel} & No & High &Yes & No \\
  L2X & Yes & High & No & Yes
\end{tabular}
} 
\caption{Summary of the properties of different
  methods. ``Training'' indicates whether a method
  requires training on an unlabeled data set. ``Efficiency''
  qualitatively evaluates the computational time during single
  interpretation. ``Additive'' indicates whether a method is locally
  additive. ``Model-agnostic'' indicates whether a method is generic
  to black-box models.}
\label{tab:real-data-summary}
\end{table}

At a high level, the primary differences between our approach and past
work are the following.  First, our framework \textit{globally} learns
a \textit{local} explainer, and therefore takes the distribution of
inputs into consideration. Second, our framework removes the
constraint of local feature additivity on an explainer. These
distinctions enable our framework to yield a more efficient, flexible,
and natural approach for instancewise feature selection. In summary,
our contributions in this work are as follows (see also
Table~\ref{tab:real-data-summary} for systematic comparisons):
\begin{itemize}[noitemsep,nolistsep,leftmargin=0.05\linewidth,topsep=0pt]
  \item We propose an information-based framework for
          instancewise feature selection.
  \item We introduce a learning-based method for instancewise
          feature selection, which is both efficient and model-agnostic.
\end{itemize}  
Furthermore, we show that the effectiveness of our method on a variety of synthetic and real data sets using both quantitative metric and human evaluation on Amazon Mechanical
Turk.

\section{A framework} 

We now lay out the primary ingredients of our general approach.  While
our framework is generic and can be applied to both classification and
regression models, the current discussion is restricted to
classification models.  We assume one has access to the output of a
model as a conditional distribution, $\ProbModel(\cdot \mid x)$, of
the response variable $Y$ given the realization of the input random
variable $X=x \in \real^d$.

\begin{figure}[h]
\centering
\includegraphics[width=0.5\linewidth]{./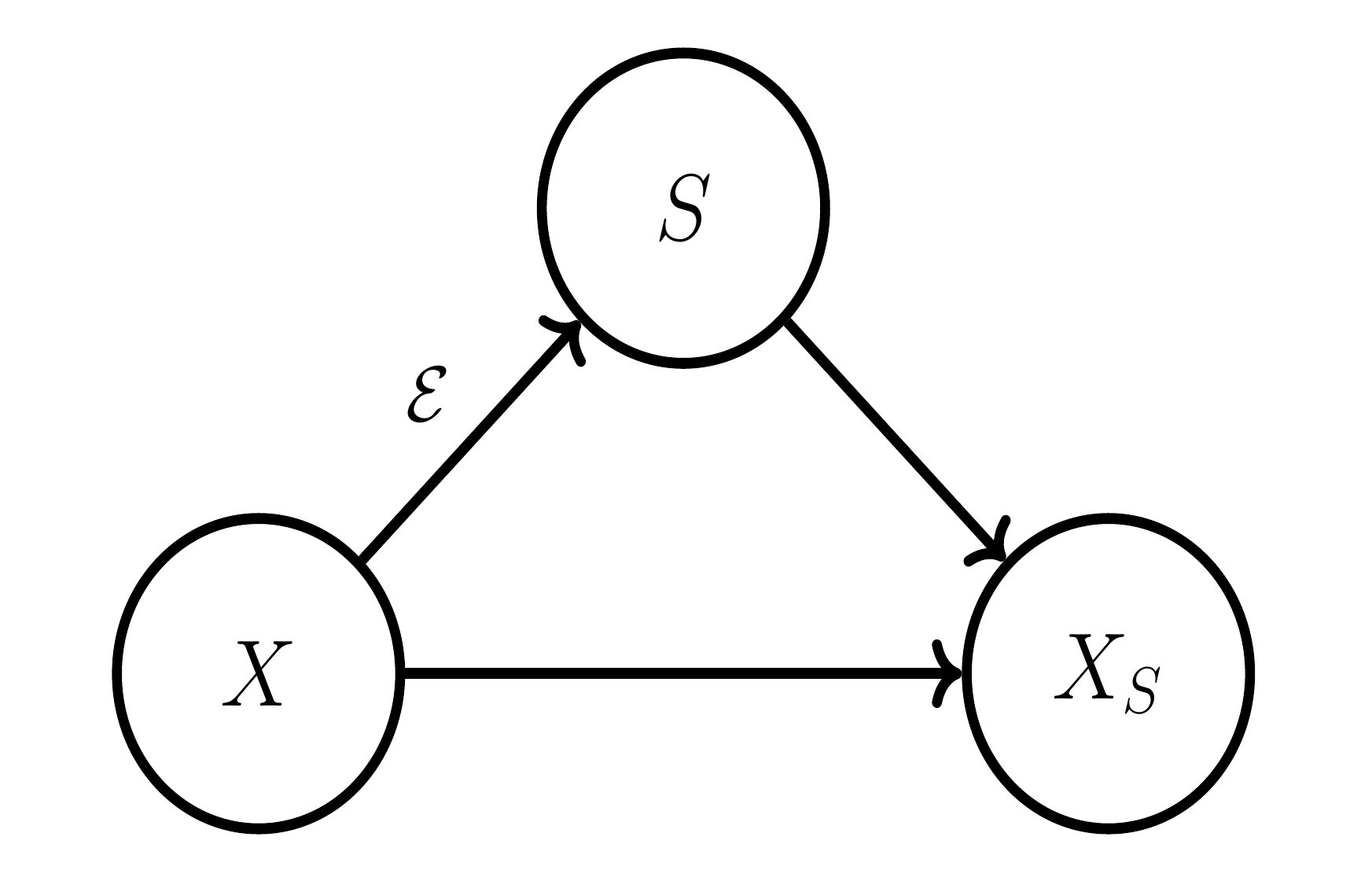}
  \caption{The graphical model of obtaining $X_S$ from $X$.}
\label{fig:graphical_model}
\end{figure}
\subsection{Mutual information}

Our method is derived from considering the mutual information between
a particular pair of random vectors, so we begin by providing some
basic background.  Given two random vectors $X$ and $Y$, the
\textit{mutual information} $I(X;Y)$ is a measure of dependence
between them; intuitively, it corresponds to how much knowledge of one
random vector reduces the uncertainty about the other. More precisely,
the mutual information is given by the Kullback-Leibler divergence of
the product of marginal distributions of $X$ and $Y$ from the joint
distribution of $X$ and $Y$~\cite{cover2012elements}; it takes the
form
\begin{align*}
I(X;Y) = \Exp_{X,Y} \left [ \log \frac{p_{XY}(X,Y)}{p_X(X)p_Y(Y)}
  \right],
\end{align*}
where $p_{XY}$ and $p_X,p_Y$ are the joint and marginal probability
densities if $X,Y$ are continuous, or the joint and marginal
probability mass functions if they are discrete. The expectation is
taken with respect to the joint distribution of $X$ and $Y$. One can
show the mutual information is nonnegative and symmetric in two random
variables. The mutual information has been a popular criteria in
feature selection, where one selects the subset of features that
approximately maximizes the mutual information between the response
variable and the selected
features~\cite{gao2016variational,peng2005feature}. Here we propose to
use mutual information as a criteria for instancewise feature
selection.

\subsection{How to construct explanations}
We now describe how to construct explanations using mutual
information.  In our specific setting, the pair $(X, Y)$ are
characterized by the marginal distribution \mbox{$X \sim
  \ProbX(\cdot)$,} and a family of conditional distributions of the
form \mbox{$(Y \mid x) \sim \ProbModel(\cdot \mid x)$.}  For a given
positive integer $k$, let $\powerset_k = \{S\subset 2^{d} \, \mid \,
|S|=k\}$ be the set of all subsets of size $k$.  An \emph{explainer}
$\Explain$ of size $k$ is a mapping from the feature space $\real^d$
to the power set $\powerset_k$; we allow the mapping to be randomized,
meaning that we can also think of $\Explain$ as mapping $x$ to a
conditional distribution $\Prob( S \mid x)$ over $S \in \powerset_k$.
Given the chosen subset $S = \Explain(x)$, we use $x_{S}$ to denote
the sub-vector formed by the chosen features.  We view the choice of
the number of explaining features $k$ as best left in the hands of the
user, but it can also be tuned as a hyper-parameter.

We have thus defined a new random vector $X_S \in \real^{k}$; see
Figure~\ref{fig:graphical_model} for a probabilistic graphical model
representing its construction.  We formulate instancewise feature
selection as seeking explainer that optimizes the criterion
\begin{align}
\label{opt:mi}   
  \max_{\Explain} I(X_S;Y) \quad \text{subject to} \qquad S \sim
  \Explain(X).
\end{align}
In words, we aim to maximize the mutual information between the
response variable from the model and the selected features, as a
function of the choice of selection rule.

It turns out that a global optimum of Problem~\eqref{opt:mi} has a
natural information-theoretic interpretation: it corresponds to the
minimization of the expected length of encoded message for the model
$\ProbModel(Y \mid x)$ using $\ProbModel(Y|x_S)$, where the latter
corresponds to the conditional distribution of $Y$ upon observing the
selected sub-vector.  More concretely, we have the following:

\begin{theorem}
  \label{thm:instance} 
Letting $\ExpModel[\cdot \mid x]$ denote the expectation over
$\ProbModel( \cdot \mid x)$, define
\begin{align}
\Explain^*(x) & \defn \arg \min_{S} \; \ExpModel \left[
  \log\frac{1}{\ProbModel(Y \mid x_S)} \; \Big| \; x \right].
\end{align}
Then $\Explain^*$ is a global optimum of
Problem~\eqref{opt:mi}. Conversely, any global optimum of
Problem~\eqref{opt:mi} degenerates to $\Explain^*$ almost surely over
the marginal distribution $\ProbX$.
\end{theorem}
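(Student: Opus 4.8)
The plan is to reduce the mutual-information maximization in~\eqref{opt:mi} to a pointwise minimization over the finite set $\powerset_k$. First I would write $I(X_S;Y) = H(Y) - H(Y \mid X_S)$ and note that the marginal entropy $H(Y)$, being the entropy of $\int \ProbModel(\cdot \mid x)\, d\ProbX(x)$, does not depend on the explainer; hence \eqref{opt:mi} is equivalent to minimizing $H(Y \mid X_S)$ over $\Explain$.

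Next I would unfold $H(Y \mid X_S)$ using the graphical structure of Figure~\ref{fig:graphical_model}, namely that $Y \perp S \mid X$ and that $X_S$ is a deterministic function of $(X,S)$. Writing $H(Y \mid X_S) = \Exp_{X_S,Y}\big[-\log \Prob(Y \mid X_S)\big]$ with $\Prob(\cdot \mid X_S)$ the true posterior, the tower property gives
\begin{align*}
H(Y \mid X_S) = \Exp_{X}\Big[\Exp_{S \sim \Explain(X)}\big[\, \ExpModel[-\log \ProbModel(Y \mid X_S) \mid X]\, \big]\Big],
\end{align*}
where $\ProbModel(\cdot \mid x_S)$ denotes the conditional law of $Y$ given the observed sub-vector. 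Since $H(Y)$ is constant, maximizing $I(X_S;Y)$ is exactly minimizing this nested expectation over $\Explain$. For each fixed $x$, the innermost quantity $\ExpModel[\log\frac{1}{\ProbModel(Y \mid x_S)} \mid x]$ is a function of $S$ ranging over the \emph{finite} set $\powerset_k$, so the expectation over $S \sim \Explain(x)$ is minimized precisely by concentrating all mass on $\arg\min_S \ExpModel[\log\frac{1}{\ProbModel(Y \mid x_S)} \mid x]$. This shows that the deterministic rule $\Explain^*$ attains the minimum, proving the first assertion; and any minimizing $\Explain$ must, for $\ProbX$-almost every $x$, be supported on that $\arg\min$ set, hence it degenerates to $\Explain^*$ almost surely (coinciding with it wherever the minimizer is unique), proving the converse.

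The delicate point — and the step I expect to be the main obstacle — is that the object $\ProbModel(Y \mid x_S)$ appearing in the definition of $\Explain^*$ is itself the posterior \emph{induced by} the explainer, so the identity above and the definition of $\Explain^*$ are coupled, and one must check that this characterization is genuinely self-consistent rather than circular. I would handle this by a variational relaxation: introduce a free conditional distribution $Q(y \mid x_S)$, use Gibbs' inequality in the form $H(Y \mid X_S) = \min_Q \Exp_X \Exp_{S \sim \Explain(X)}[\ExpModel[-\log Q(Y \mid X_S) \mid X]]$ (the minimum over $Q$ being attained at the true posterior), interchange the two minima, solve the inner minimization over $\Explain$ pointwise in $x$ as above for each fixed $Q$, and then verify that at the joint optimum $Q$ equals the posterior induced by the resulting deterministic selector — which is exactly $\ProbModel(\cdot \mid x_S)$ — so that $\Explain^*$ as defined in the theorem is recovered. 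The remaining care is measure-theoretic: ruling out that a randomized explainer can strictly beat a deterministic one (which is immediate from the pointwise $\arg\min$ once $Q$ is fixed) and formulating ``degenerates almost surely'' correctly when the $\arg\min$ is not unique.
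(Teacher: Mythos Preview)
Your first three paragraphs --- the decomposition $I(X_S;Y)=H(Y)-H(Y\mid X_S)$, the tower-property expansion to $\Exp_X\,\Exp_{S\sim\Explain(X)}\,\ExpModel\big[-\log\ProbModel(Y\mid X_S)\mid X\big]$, and the observation that for each fixed $x$ the inner average over the finite set $\powerset_k$ is minimized by putting all mass on the $\arg\min$ --- are exactly the paper's proof. The paper writes the pointwise inequality
\[
\Exp_{S\mid X}\,\ExpModel\big[\log\ProbModel(Y\mid X_S)\,\big|\,X\big]\;\le\;\ExpModel\big[\log\ProbModel(Y\mid X_{S^*(X)})\,\big|\,X\big]
\]
directly from the definition of $S^*$, takes $\Exp_X$ and subtracts $\Exp\log\ProbModel(Y)$ to conclude $I(X_S;Y)\le I(X_{S^*};Y)$; the converse is a one-paragraph contradiction on the set $M=\{x:\Prob(S\notin S^*(x)\mid X=x)>0\}$, which is precisely your ``degenerates almost surely'' statement.

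Where you diverge from the paper is the ``delicate point'' about the coupling between $\Explain$ and the posterior $\ProbModel(\cdot\mid x_S)$. The paper does not address this at all: it simply treats $\ProbModel(Y\mid x_S)$ as a fixed function of the pair $(S,x_S)$ --- the model's conditional law given that the coordinates indexed by $S$ take the values $x_S$, with the remaining coordinates integrated out under $\ProbX$ --- and silently identifies the resulting expectation with the mutual information. Your variational detour through a free $Q$ and the subsequent self-consistency check are therefore \emph{more} careful than the paper's own argument, not an alternative route to the same level of rigor; in effect the paper's proof is your first part without your second. What your extra step buys is an honest acknowledgment that the identity $H(Y\mid X_S)=\Exp\big[\ExpModel[-\log\ProbModel(Y\mid X_S)\mid X]\big]$ needs the $\ProbModel(\cdot\mid x_S)$ inside to be the posterior induced by $\Explain$, whereas the pointwise $\arg\min$ defining $\Explain^*$ needs it to be explainer-free; the paper leaves that tension implicit.
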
 
The proof of Theorem~\ref{thm:instance} is left to Appendix.
In practice, the above global optimum is
obtained only if the explanation family $\mathcal E$ is sufficiently
large. In the case when $\ProbModel(Y|x_S)$ is unknown or
computationally expensive to estimate accurately, we can choose to
restrict $\mathcal E$ to suitably controlled families so as to prevent
overfitting.

\section{Proposed method} 

A direct solution to Problem~\eqref{opt:mi} is not possible, so that
we need to approach it by a variational approximation.  In particular,
we derive a lower bound on the mutual information, and we approximate
the model conditional distribution $\ProbModel$ by a suitably rich
family of functions.


\subsection{Obtaining a tractable variational formulation}
We now describe the steps taken to obtain a tractable variational
formulation.

\paragraph{A variational lower bound:}

Mutual information between $X_S$ and $Y$ can be expressed in terms of
the conditional distribution of $Y$ given $X_S$:
\begin{align*}
I(X_S,Y) &= \Exp \Big[ \log \frac{\ProbModel(X_S,Y)}{\Prob(X_S)
    \ProbModel(Y)} \Big] = \Exp \Big[ \log \frac{\ProbModel(Y |
    X_S)}{\ProbModel(Y)} \Big] \\
& = \Exp \Big[\log \ProbModel(Y|X_S)\Big ] +\text{Const.} \\
& = \Exp_X \Exp_{S|X} \Exp_{Y|X_S} \Big [\log \ProbModel(Y|X_S) \Big]
+ \text{Const.}
\end{align*}
For a generic model, it is impossible to compute expectations under
the conditional distribution $\ProbModel(\cdot \mid x_s)$.  Hence we
introduce a variational family for approximation:
\begin{align}
\Qfamily & \defn \Big \{\Qprob \mid \Qprob = \{x_S\to \Qprob_S(Y|x_S), S \in
\powerset_k\} \Big \}.
\end{align}
Note each member $\Qprob$ of the family $\Qfamily$ is a collection of
conditional distributions $\Qprob_S(Y|x_S)$, one for each choice of
$k$-sized feature subset $S$. For any $\Qprob$, an application of
Jensen's inequality yields the lower bound
\begin{align*}
\Exp_{Y|X_S}[\log \ProbModel(Y | X_S)] & \geq \int \ProbModel(Y
| X_S) \log \Qprob_S(Y | X_S) \\
& = \Exp_{Y|X_S}[\log \Qprob_S(Y|X_S)],
\end{align*}
where equality holds if and only if $\ProbModel(Y \mid X_S)$ and
$\Qprob_S(Y|X_S)$ are equal in distribution. We have thus obtained a
variational lower bound of the mutual information $I(X_S;Y)$.
Problem~\eqref{opt:mi} can thus be relaxed as maximizing the
variational lower bound, over both the explanation $\Explain$ and the
conditional distribution $\Qprob$:
\begin{align}
\label{opt:variational}
\max_{\Explain, \Qprob } \Exp \Big[\log \Qprob_S(Y \mid X_S) \Big]
\qquad \mbox{such that $S \sim \mathcal \Explain(X)$.}
\end{align}
For generic choices $\Qprob$ and $\Explain$, it is still difficult to
solve the variational approximation~\eqref{opt:variational}.  In order
to obtain a tractable method, we need to restrict both $\Qprob$ and $\Explain$ to suitable families over which it is efficient to perform
optimization.

\paragraph{A single neural network for parametrizing $\Qprob$:} Recall that $\Qprob = \{ \Qprob_S( \cdot \mid x_S), \; S \in
\powerset_k\}$ is a collection of conditional distributions with
cardinality $|\Qprob|= {d \choose k}$. We assume $X$ is a continuous
random vector, and $\ProbModel(Y \mid x)$ is continuous with respect
to $x$. Then we introduce a single neural network function $g_\alpha:
\real^d \to \Delta_{c-1}$ for parametrizing $\Qprob$, where the
codomain is a $(c-1)$-simplex \mbox{$\Delta_{c-1}=\{y\in [0,1]^c:0\leq
  y_i\leq 1, \sum_{i=1}^c y_i = 1\}$} for the class distribution, and
$\alpha$ denotes the learnable parameters. We define $\Qprob_S(Y|x_S)
\defn g_\alpha(\tilde x_S)$, where $\tilde x_S \in \real^d$ is
transformed from $x$ with entries not in $S$ replaced by zeros:
\begin{align*}
(\Tilde x_S)_i =
\begin{cases}
x_i, i\in S,\\ 0, i\notin S.
\end{cases}  
\end{align*}
When $X$ contains discrete features, we embed each discrete feature
with a vector, and the vector representing a specific feature is set
to zero simultaneously when the corresponding feature is not in $S$.

\subsection{Continuous relaxation of subset sampling} 

Direct estimation of the objective function in
equation~\eqref{opt:variational} requires summing over ${d \choose k}$
combinations of feature subsets after the variational
approximation. Several tricks exist for tackling this issue, like
REINFORCE-type Algorithms~\cite{williams1992simple}, or weighted sum
of features parametrized by deterministic functions of $X$. (A similar
concept to the second trick is the ``soft attention'' structure in
vision~\cite{ba2014multiple} and NLP~\cite{bahdanau+al-2014-nmt} where
the weight of each feature is parametrized by a function of the
respective feature itself.) We employ an alternative approach
generalized from Concrete Relaxation (Gumbel-softmax
trick)~\cite{jang2017categorical,maddison2014sampling,maddison2016concrete},
which empirically has a lower variance than REINFORCE and encourages
discreteness~\cite{raffel2017online}.

The Gumbel-softmax trick uses the concrete distribution as a
continuous differentiable approximation to a categorical distribution.
In particular, suppose we want to approximate a categorical random
variable represented as a one-hot vector in $\mathbb R^d$ with
category probability $p_1,p_2,\dots,p_d$. The random perturbation for
each category is independently generated from a Gumbel$(0,1)$
distribution:
\begin{align*}
G_i = -\log (-\log u_i), u_i\sim \text{Uniform}(0,1).
\end{align*} 
We add the random perturbation to the log probability of each category
and take a temperature-dependent softmax over the $d$-dimensional
vector:
\begin{align*}
C_i = \frac{\exp\{(\log p_i + G_i)/\tau\}}{\sum_{j=1}^d \exp\{(\log
  p_j + G_j)/\tau\}}.
\end{align*}
The resulting random vector $C=(C_1,\dots,C_d)$ is called a Concrete
random vector, which we denote by
\begin{align*}
C\sim \text{Concrete}(\log p_1,\dots, \log p_d).
\end{align*}

We apply the Gumbel-softmax trick to approximate weighted subset
sampling. We would like to sample a subset $S$ of $k$ distinct
features out of the $d$ dimensions. The sampling scheme for $S$ can be
equivalently viewed as sampling a $k$-hot random vector $Z$ from
$D^d_k \defn \{z\in\{0,1\}^d \mid \sum z_i = k\}$, with each entry of
$z$ being one if it is in the selected subset $S$ and being zero
otherwise. An importance score which depends on the input vector is
assigned for each feature. Concretely, we define $w_\theta \colon
\real^d \to \real^d$ that maps the input to a $d$-dimensional vector,
with the $i$th entry of $w_\theta(X)$ representing the importance
score of the $i$th feature.

We start with approximating sampling $k$ distinct features out of $d$
features by the sampling scheme below: Sample a single feature out of
$d$ features independently for $k$ times. Discard the overlapping
features and keep the rest. Such a scheme samples at most $k$
features, and is easier to approximate by a continuous relaxation.  We
further approximate the above scheme by independently sampling $k$
independent Concrete random vectors, and then we define a
$d$-dimensional random vector $V$ that is the elementwise maximum of
$C^1,C^2,\dots,C^k$:
\begin{align*}
C^j & \sim \text{Concrete}(w_\theta(X))\text{ i.i.d. for }
j=1,2,\dots,k,\\
V & = (V_1,V_2,\dots,V_d),\quad V_i = \max_{j} C_i^j.
\end{align*}  
The random vector $V$ is then used to approximate the $k$-hot random
vector $Z$ during training.

We write $V = V(\theta,\zeta)$ as $V$ is a function of $\theta$ and a
collection of auxiliary random variables $\zeta$ sampled independently
from the Gumbel distribution. Then we use the elementwise product
$V(\theta,\zeta)\odot X$ between~$V$~and~$X$ as an approximation of
$\tilde X_S$.

\subsection{The final objective and its optimization} 

After having applied the continuous approximation of feature subset
sampling, we have reduced Problem~\eqref{opt:variational} to the
following:
\begin{align}
  \label{opt:parametrized}
\max_{ \theta,\alpha} \Exp_{X,Y,\zeta} \Big[ \log
  g_\alpha(V(\theta,\zeta)\odot X, Y) \Big],
\end{align}
where $g_\alpha$ denotes the neural network used to approximate the
model conditional distribution, and the quantity $\theta$ is used to
parametrize the explainer. In the case of classification with $c$
classes, we can write
\begin{align}
 \label{eq:final_obj}  
\Exp_{X,\zeta} \Big[ \sum_{y=1}^c [\ProbModel(y \mid X) \log
    g_\alpha(V(\theta,\zeta) \odot X, y) \Big].
\end{align}
Note that the expectation operator $\Exp_{X, \zeta}$ does not depend
on the parameters $(\alpha, \theta)$, so that during the training
stage, we can apply stochastic gradient methods to jointly optimize
the pair $(\alpha, \theta)$.  In each update, we sample a mini-batch
of unlabeled data with their class distributions from the model to be
explained, and the auxiliary random variables $\zeta$, and we then
compute a Monte Carlo estimate of the gradient of the objective
function~\eqref{eq:final_obj}.

\subsection{The explaining stage} 

During the explaining stage, the learned explainer maps each sample
$X$ to a weight vector $w_\theta(X)$ of dimension $d$, each entry
representing the importance of the corresponding feature for the
specific sample $X$. In order to provide a deterministic explanation
for a given sample, we rank features according to the weight vector,
and the $k$ features with the largest weights are picked as the
explaining features.
 
For each sample, only a single forward pass through the neural network
parametrizing the explainer is required to yield explanation. Thus our
algorithm is much more efficient in the explaining stage compared to
other model-agnostic explainers like LIME or Kernel SHAP which require
thousands of evaluations of the original model per sample.

\vspace{-2mm}
\section{Experiments} 
\vspace{-1mm}

We carry out experiments on both synthetic and real data sets. 
For all experiments, we use RMSprop~\cite{maddison2016concrete} with the default hyperparameters for
optimization. We also fix the step size to be $0.001$ across
experiments. The temperature for Gumbel-softmax approximation is fixed
to be $0.1$. Codes for reproducing the key results are available online at \url{https://github.com/Jianbo-Lab/L2X}. 

\begin{figure}[!bt] 
\centering 
\includegraphics[width=0.9\linewidth]{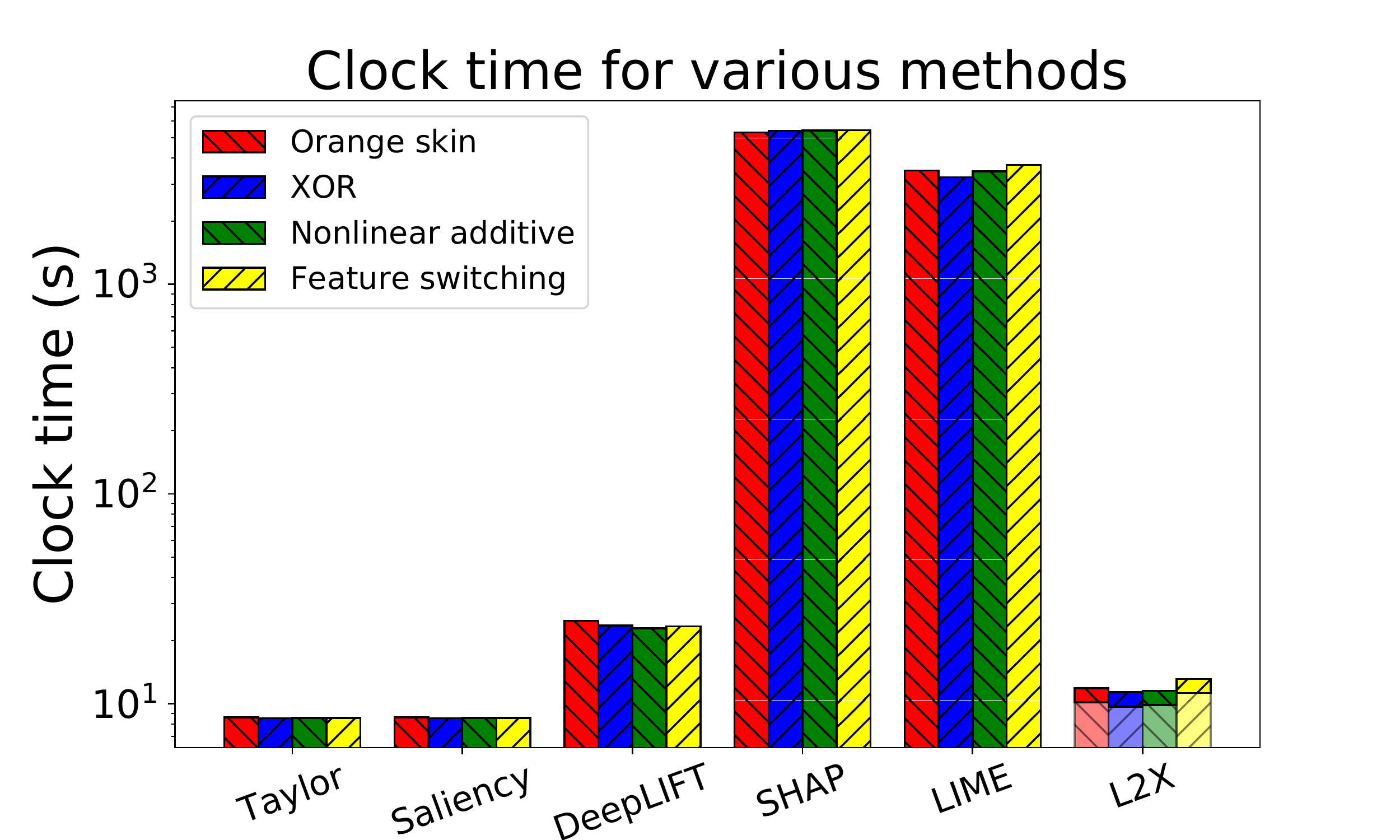}
\caption{The clock time (in log scale) of explaining $10,000$ samples
  for each method. The training time of L2X is shown in translucent
  bars.}
\label{fig:time}  
\end{figure}

\subsection{Synthetic Data} 
We begin with experiments on four synthetic data sets: 

\begin{itemize}[noitemsep,nolistsep,leftmargin=0.05\linewidth,topsep=0pt] 
\item $2$-dimensional XOR as binary classification. The input vector $X$ is generated from a $10$-dimensional standard Gaussian. The response variable $Y$ is generated from $P(Y=1|X)\propto \exp \{X_1X_2\}$. 
\item Orange Skin. The input vector $X$ is generated from a $10$-dimensional standard Gaussian. The response variable $Y$ is generated from $P(Y=1|X)\propto \exp \{\sum_{i=1}^4 X_i^2 - 4\}$. 
\item Nonlinear additive model. Generate $X$ from a 10-dimensional standard Gaussian. The response variable $Y$ is generated from $P(Y=1|X) \propto \exp\{-100\sin(2X_1)+2|X_2| + X_3 + \exp\{-X_4\}\}$. 
\item Switch feature. Generate $X_{1}$ from a mixture of two Gaussians centered at $\pm 3$ respectively with equal probability. If $X_{1}$ is generated from the Gaussian centered at $3$, the $2-5$th dimensions are used to generate~$Y$ like the orange skin model. Otherwise, the $6-9th$ dimensions are used to generate $Y$ from the nonlinear additive model.   
\end{itemize}

\begin{figure*}[h] 
\centering 
\includegraphics[width=0.4\linewidth]{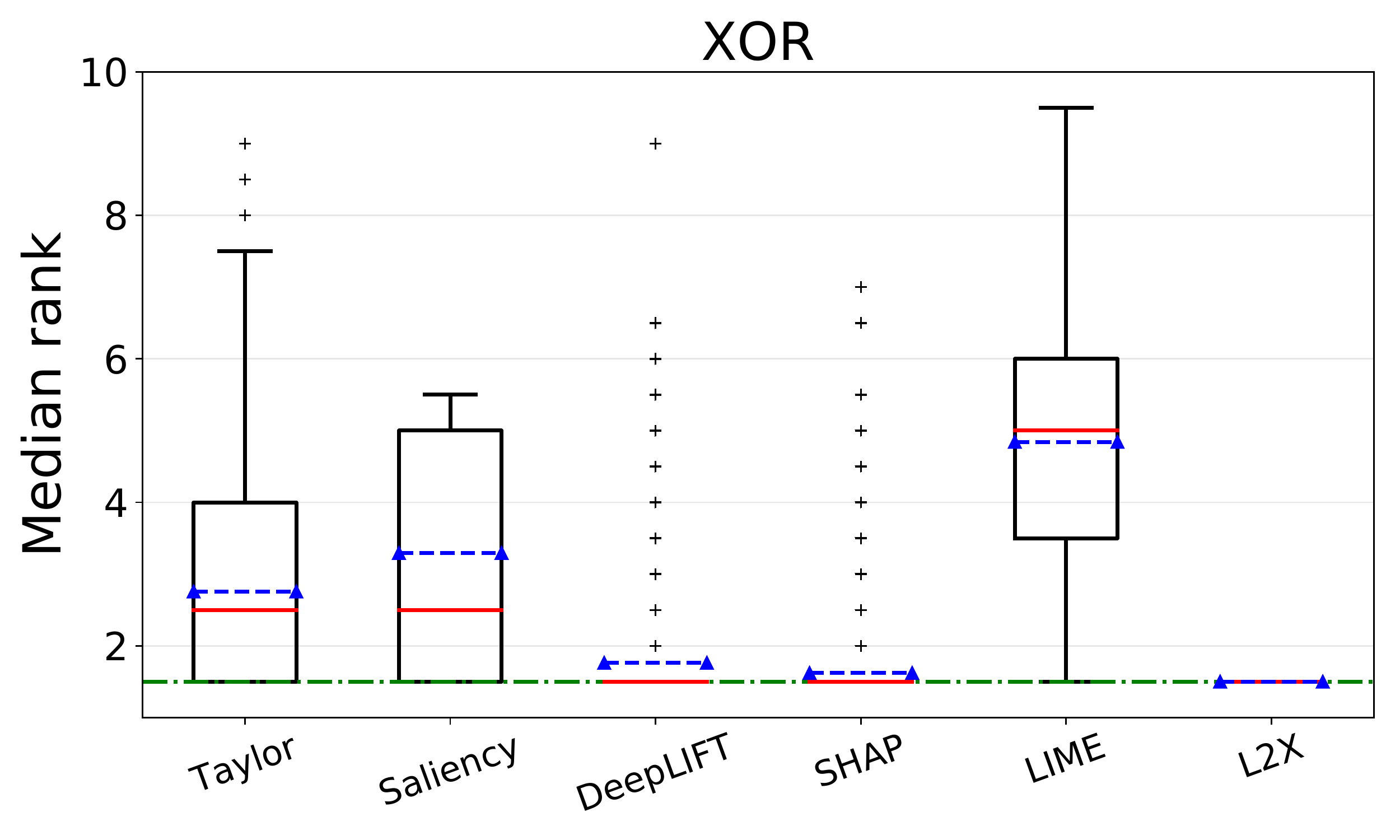}%
\includegraphics[width=0.4\linewidth]{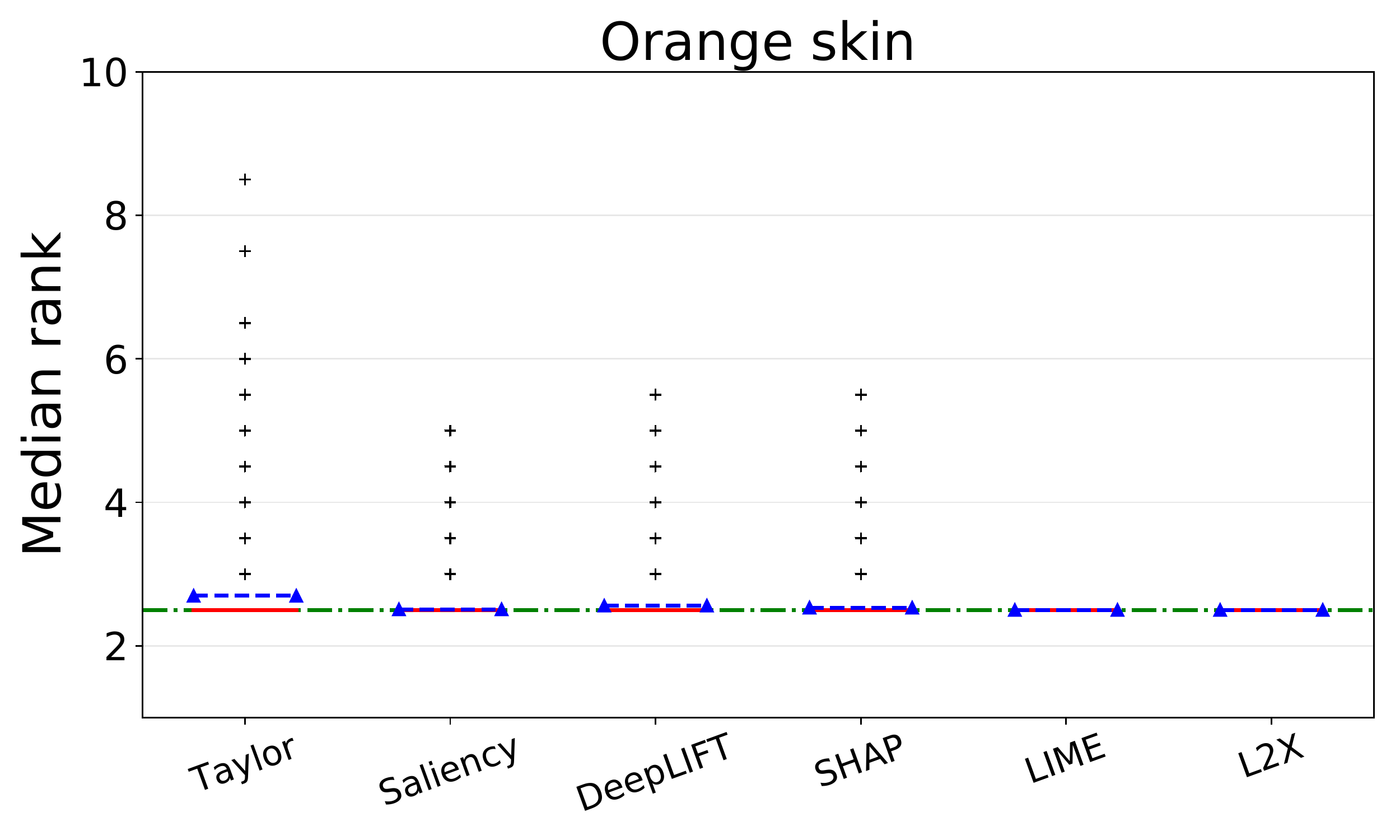}
\includegraphics[width=0.4\linewidth]{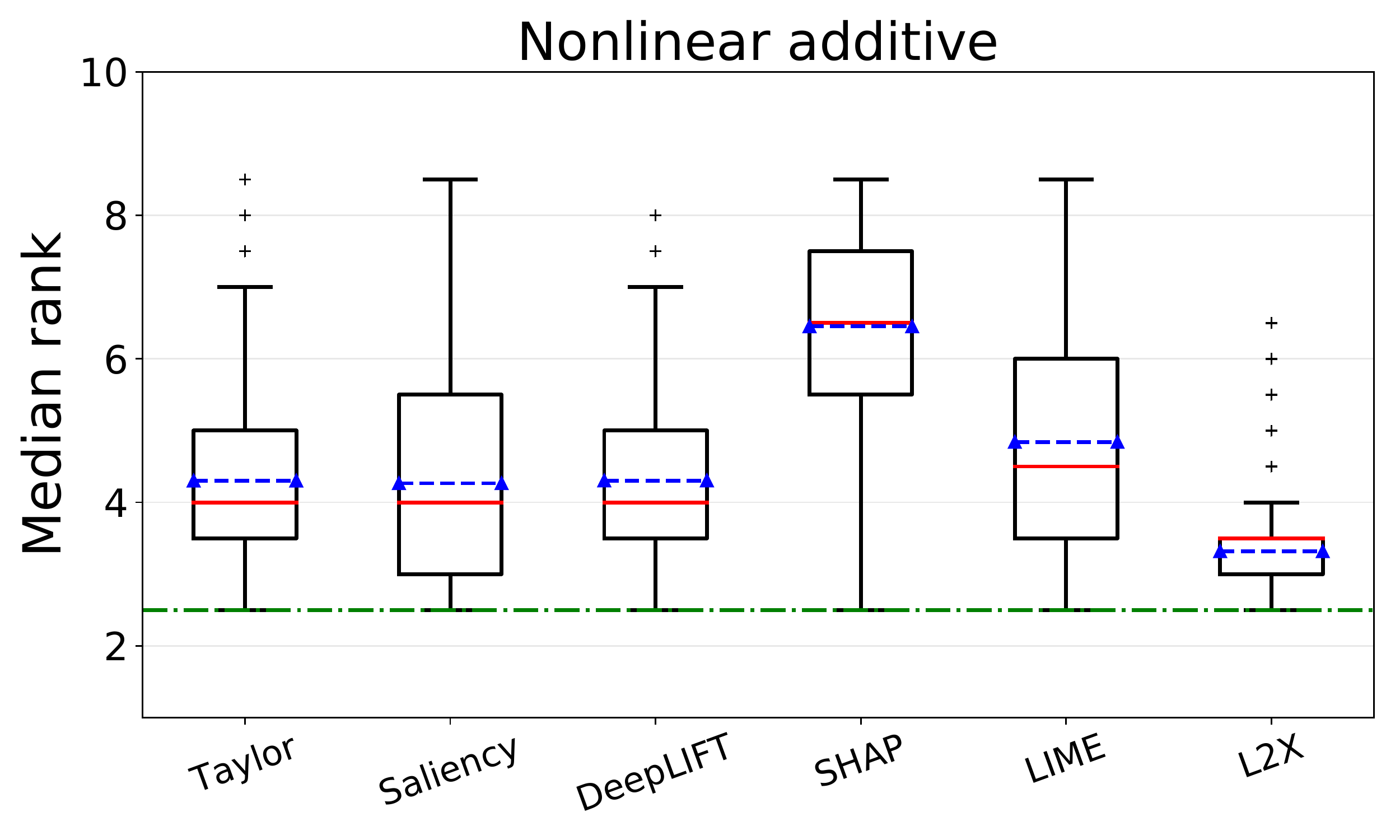}%
\includegraphics[width=0.4\linewidth]{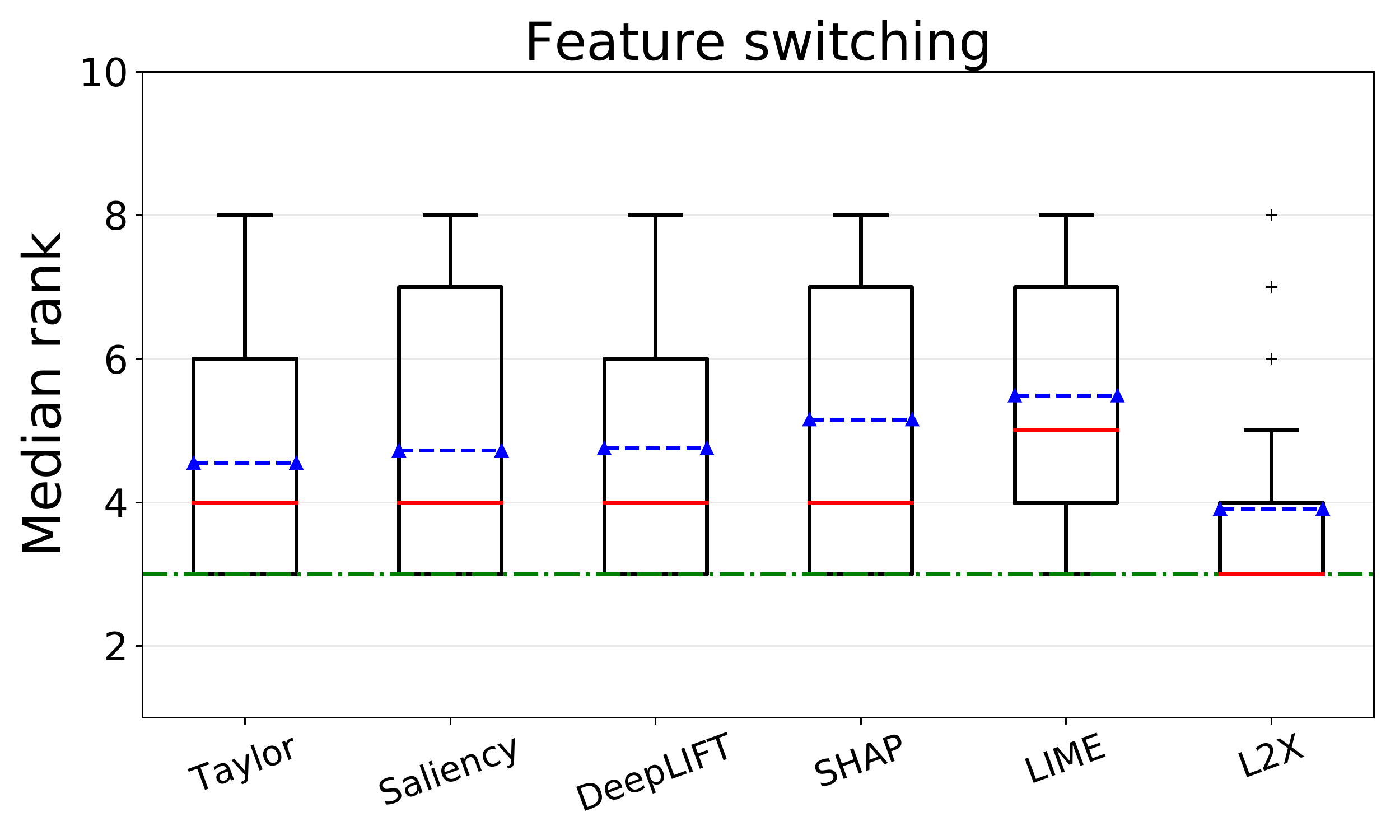} 
\caption{The box plots for the median ranks of the influential features by each sample, over $10,000$ samples for each data set. The red line and the dotted blue line on each box is the median and the mean respectively. Lower median ranks are better. The dotted green lines indicate the optimal median rank.}
\label{fig:synthetic_boxplot}  
\end{figure*}

\begin{table*}[ht!]
\centering
\resizebox{.8\textwidth}{!}{
 \begin{tabularx}{\textwidth}{||c |c| X||} 
 \hline
 Truth & Model & {Key words}  \\ [0.5ex] 
 \hline\hline
 positive& positive& \small{Ray Liotta and Tom Hulce shine in this sterling example of brotherly \hlyellow{love} and commitment. Hulce plays Dominick, (nicky) a \hlyellow{mildly} mentally handicapped young man who is putting his 12 minutes younger, twin brother, Liotta, who plays Eugene, through medical school. It is set in Baltimore and \hlyellow{deals} with the issues of sibling rivalry, the unbreakable \hlyellow{bond} of twins, child abuse and good always \hlyellow{winning} out over evil. It is \hlyellow{captivating}, and filled with laughter and \hlyellow{tears}. If you have not yet seen this film, \hlyellow{please rent} it, I promise, you'll be amazed at how such a \hlyellow{wonderful} film could go unnoticed.}\\
 \hline
negative & negative & \small{\hlyellow{Sorry} to go against the flow
  but I thought \hlyellow{this} film was \hlyellow{unrealistic},
  \hlyellow{boring} and way too long. I got \hlyellow{tired} of
  watching Gena Rowlands long arduous battle with herself and the
  crisis she was experiencing. Maybe the film has some cinematic value
  or represented an important \hlyellow{step} for the director but
  \hlyellow{for} pure \hlyellow{entertainment value}. I wish I would
  \hlyellow{have} skipped it.}\\
\hline negative&positive& \small{This movie is \hlyellow{chilling
    reminder} of Bollywood being just a parasite of
  Hollywood. Bollywood also tends to feed on past blockbusters for
  furthering its industry. Vidhu Vinod Chopra made this
  \hlyellow{movie} with the reasoning that a cocktail mix of deewar
  and on the waterfront will bring home an \hlyellow{oscar}. It turned
  out to be rookie mistake\hlyellow{.} Even the \hlyellow{idea} of the
  title is \hlyellow{inspired} from the Elia Kazan
  \hlyellow{classic}. In the original, Brando is shown as raising
  doves as symbolism of peace. \hlyellow{Bollywood} \hlyellow{must}
  move out of Hollywoods shadow if it needs to be taken seriously.}
\\ 
\hline positive& negative& \small{When a small town is threatened by a
  child killer, a lady \hlyellow{police} officer goes after him by
  pretending to be his friend. As she becomes more and more
  emotionally \hlyellow{involved} with the murderer her psyche begins
  to take a beating causing her to lose focus on the \hlyellow{job} of
  catching the \hlyellow{criminal}. Not a film of high voltage
  excitement, but \hlyellow{solid police} work and \hlyellow{a good
    depiction} of the faulty mind of a psychotic \hlyellow{loser}.}\\
\hline
 \end{tabularx} 
 } 
 \caption{True labels and labels predicted by the model are in the first two columns. Key words picked by L2X are highlighted in yellow.}
\label{table:imdb_word}
\end{table*}

The first three data sets are modified from commonly used data sets in the feature selection literature~\cite{chen2017kernel}. The fourth data set is designed specifically for instancewise feature selection. Every sample in the first data set has the first two dimensions as true features, where each dimension itself is independent of the response variable $Y$ but the combination of them has a joint effect on $Y$. In the second data set, the samples with positive labels centered around a sphere in a four-dimensional space. The sufficient statistic is formed by an additive model of the first four features. The response variable in the third data set is generated from a nonlinear additive model using the first four features. The last data set switches important features (roughly) based on the sign of the first feature. The $1-5$ features are true for samples with $X_1$ generated from the Gaussian centered at $-3$, and the $1,6-9$ features are true otherwise.   

We compare our method L2X (for ``Learning to Explain'') with several
strong existing algorithms for instancewise feature selection,
including Saliency~\cite{simonyan2013deep}, DeepLIFT~\cite{shrikumar2016not}, SHAP~\cite{lundberg2017unified}, LIME~\cite{ribeiro2016should}.  Saliency refers to the method that computes
the gradient of the selected class with respect to the input feature
and uses the absolute values as importance scores. SHAP refers to
Kernel SHAP. The number of samples used for explaining each instance
for LIME and SHAP is set as default for all experiments. We also
compare with a method that ranks features by the
input feature times the gradient of the selected class with respect to
the input feature. \citet{shrikumar2016not} showed it is equivalent to
LRP~\cite{bach2015pixel} when activations are piecewise linear, and
used it in \citet{shrikumar2016not} as a strong baseline. We call it
``Taylor'' as it is the first-order Taylor approximation of the model.

Our experimental setup is as follows. For each data set, we train a neural network model with three hidden dense layers. We can safely assume the neural network has successfully captured the important features, and ignored noise features, based on its error rate. Then we use Taylor, Saliency, DeepLIFT, SHAP, LIME, and L2X for instancewise feature selection on the trained neural network models. For L2X, the explainer is a neural network composed of two hidden layers. The variational family is composed of three hidden layers. All layers are linear with dimension $200$. The number of desired features $k$ is set to the number of true features.

The underlying true features are known for each sample, and hence the
median ranks of selected features for each sample in a validation data
set are reported as a performance metric, the box plots of which have
been plotted in Figure~\ref{fig:synthetic_boxplot}. We observe that
L2X outperforms all other methods on nonlinear additive and feature
switching data sets. On the XOR model, DeepLIFT, SHAP and L2X achieve
the best performance. On the orange skin model, all algorithms have
near optimal performance, with L2X and LIME achieving the most stable
performance across samples. 

We also report the clock time of each method in Figure~\ref{fig:time},
where all experiments were performed on a single NVidia Tesla k80 GPU,
coded in TensorFlow. Across all the four data sets, SHAP and LIME are
the least efficient as they require multiple evaluations of the
model. DeepLIFT, Taylor and Saliency requires a backward pass of the
model. DeepLIFT is the slowest among the three, probably due to the
fact that backpropagation of gradients for Taylor and Saliency are
built-in operations of TensorFlow, while backpropagation in DeepLIFT
is implemented with high-level operations in TensorFlow. Our method
L2X is the most efficient in the explanation stage as it only requires
a forward pass of the subset sampler. It is much more efficient
compared to SHAP and LIME even after the training time has been taken
into consideration, when a moderate number of samples (10,000) need to
be explained. As the scale of the data to be explained increases, the
training of L2X accounts for a smaller proportion of the over-all
time. Thus the relative efficiency of L2X to other algorithms
increases with the size of a data set.

\begin{table*}[!bt]
\centering
\resizebox{.8\textwidth}{!}{
 \begin{tabularx}{\textwidth}{||c |c| X||} 
 \hline
 Truth & Predicted & Key sentence  \\ [0.5ex] 
 \hline\hline
positive& positive& \small{There are few really hilarious films about science fiction but this one will knock your sox off. The lead Martians Jack Nicholson take-off is side-splitting. The plot has a very clever twist that has be seen to be enjoyed. \hlyellow{This is a movie with heart and excellent acting by all.} Make some popcorn and have a great evening.}\\ 
\hline
negative& negative& \small{You get 5 writers together, have each write a different story with a different genre, and then you try to make one movie out of it. Its action, its adventure, its sci-fi, its western, its a mess. \hlyellow{Sorry, but this movie absolutely stinks.} 4.5 is giving it an awefully high rating. That said, its movies like this that make me think I could write movies, and I can barely write.}\\ 
\hline
negative& positive& \small{This movie is not the same as the 1954 version with Judy garland and James mason, and that is a shame because the 1954 version is, in my opinion, much better. I am not denying Barbra Streisand's talent at all. \hlyellow{She is a good actress and brilliant singer.} I am not acquainted with Kris Kristofferson's other work and therefore I can't pass judgment on it. However, this movie leaves much to be desired. It is paced slowly, it has gratuitous nudity and foul language, and can be very difficult to sit through. However, I am not a big fan of rock music, so its only natural that I would like the judy garland version better. See the 1976 film with Barbra and Kris, and judge for yourself.}\\  
\hline
positive& negative& \small{The first time you see the second renaissance it may look boring. Look at it at least twice and definitely watch part 2. it will change your view of the matrix. Are the human people the ones who started the war? \hlyellow{Is ai a bad thing?}}\\
\hline

 \end{tabularx}
 } 
 \caption{True labels and labels from the model are shown in the first two columns. Key sentences picked by L2X highlighted in yellow.}

\label{table:imdb_sent}

\end{table*}

\subsection{IMDB}  

The Large Movie Review Dataset (IMDB) is a dataset of movie reviews for sentiment classification~\cite{maas2011learning}. It contains $50,000$ labeled movie reviews, with a split of $25,000$ for training and $25,000$ for testing. The average document length is $231$ words, and $10.7$ sentences. 
We use L2X to study two popular classes of models for sentiment analysis on the IMDB data set. 

\subsubsection{Explaining a CNN model with key words} 

Convolutional neural networks (CNN) have shown excellent performance for sentiment analysis~\cite{kim2014convolutional,zhang2015sensitivity}. 
We use a simple CNN model on Keras~\cite{chollet2015keras} for the
IMDB data set, which is composed of a word embedding of dimension
$50$, a $1$-D convolutional layer of kernel size $3$ with $250$
filters, a max-pooling layer and a dense layer of dimension $250$ as
hidden layers. Both the convolutional and the dense layers are
followed by ReLU as nonlinearity, and
Dropout~\cite{srivastava2014dropout} as regularization. Each review is
padded/cut to $400$ words. The CNN model achieves $90\%$ accuracy on
the test data, close to the state-of-the-art performance (around
$94\%$). We would like to find out which $k$ words make the most
influence on the decision of the model in a specific review. The
number of key words is fixed to be $k=10$ for all the experiments. 

The explainer of L2X is composed of a global component and a local component (See Figure 2 in \citet{yang2018greedy}). The input is initially fed into a common embedding layer followed by a convolutional layer with $100$ filters. Then the local component processes the common output using two convolutional layers with $50$ filters, and the global component processes the common output using a max-pooling layer followed by a $100$-dimensional dense layer. Then we concatenate the global and local outputs corresponding to each feature, and process them through one convolutional layer with $50$ filters, followed by a Dropout layer \cite{srivastava2014dropout}. Finally a convolutional network with kernel size $1$ is used to yield the output. All previous convolutional layers are of kernel size 3, and ReLU is used as nonlinearity. 
The variational family is composed of an word embedding layer
of the same size, followed by an average pooling and a
\mbox{$250$-dimensional} dense layer. Each entry of the output vector $V$
from the explainer is multiplied with the embedding of the respective
word in the variational family. We use both automatic metrics and human annotators to validate the effectiveness of L2X. 

\paragraph{Post-hoc accuracy.} We introduce \textit{post-hoc accuracy} for quantitatively validating the effectiveness of our method. Each model explainer outputs a subset of features $X_S$
for each specific sample $X$. We use $\ProbModel(y|\tilde X_S)$ to
approximate $\ProbModel(y|X_S)$. That is, we feed in the sample $X$ to
the model with unselected words masked by zero paddings. Then we
compute the accuracy of using $\ProbModel(y|\tilde X_S)$ to predict
samples in the test data set labeled by $\ProbModel(y|X)$, which we
call \textit{post-hoc accuracy} as it is computed after instancewise
feature selection.

\paragraph{Human accuracy.} When designing human experiments, we assume
that the key words convey an attitude toward a movie, and can thus be
used by a human to infer the review sentiment. This assumption has
been partially validated given the aligned outcomes provided by
post-hoc accuracy and by human judges, because the alignment implies
the consistency between the sentiment judgement based on selected
words from the original model and that from humans. Based on this
assumption, we ask humans on Amazon Mechanical Turk (AMT) to infer the
sentiment of a review given the ten key words selected by each
explainer. The words adjacent to each other, like ``not good at all,''
keep their adjacency on the AMT interface if they are selected
simultaneously. The reviews from different explainers have been mixed
randomly, and the final sentiment of each review is averaged over the
results of multiple human annotators. We measure whether the labels
from human based on selected words align with the labels provided by
the model, in terms of the average accuracy over $500$ reviews in the
test data set. Some reviews are labeled as ``neutral'' based on
selected words, which is because the selected key words do not contain
sentiment, or the selected key words contain comparable numbers of
positive and negative words. Thus these reviews are neither put in the
positive nor in the negative class when we compute accuracy. We call
this metric \textit{human accuracy}.


The result is reported in Table~\ref{table:l2x}. We observe that the model prediction based on only ten words selected by L2X align with the original prediction for over $90\%$ of the data. The human judgement given ten words also aligns with the model prediction for $84.4\%$ of the data. The human accuracy is even higher than that based on the original review, which is $83.3\%$ \cite{yang2018greedy}. This indicates the selected words by L2X can serve as key words for human to understand the model behavior. Table~\ref{table:imdb_word} shows the results of our model on four examples.





\subsubsection{Explaining hierarchical LSTM}  

Another competitive class of models in sentiment analysis uses
hierarchical LSTM~\cite{hochreiter1997long,li2015hierarchical}. We
build a simple hierarchical LSTM by putting one layer of LSTM on top
of word embeddings, which yields a representation vector for each
sentence, and then using another LSTM to encoder all sentence
vectors. The output representation vector by the second LSTM is passed
to the class distribution via a linear layer. Both the two LSTMs and
the word embedding are of dimension $100$. The word embedding is
pretrained on a large corpus~\cite{mikolov2013distributed}. Each
review is padded to contain $15$ sentences. The hierarchical LSTM
model gets around 90\% accuracy on the test data. We take each
sentence as a single feature group, and study which sentence is the
most important in each review for the model.


 The explainer of L2X is composed of a $100$-dimensional word embedding followed by a convolutional layer and a max pooling layer to encode each sentence. The encoded sentence vectors are fed through three convolutional layers and a dense layer to get sampling weights for each sentence. The variational family also encodes each sentence with a convolutional layer and a max pooling layer. The encoding vectors are weighted by the output of the subset sampler, and passed through an average pooling layer and a dense layer to the class probability. All convolutional layers are of filter size $150$ and kernel size $3$. In this setting, L2X can be interpreted as a hard attention model~\cite{xu2015show} that employs the Gumbel-softmax trick. 

Comparison is carried out with the same metrics. For \textit{human accuracy}, one selected sentence for each review is shown to human annotators. The other experimental setups are kept the same as above. We observe that post-hoc accuracy reaches $84.4\%$ with one sentence selected by L2X, and human judgements using one sentence align with the original model prediction for $77.4\%$ of data. 
Table~\ref{table:imdb_sent} shows the explanations from our model on four examples.


\begin{figure}[!bt]
\centering 
\includegraphics[width=0.10\linewidth]{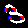}%
\includegraphics[width=0.10\linewidth]{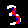}%
\includegraphics[width=0.10\linewidth]{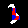}%
\includegraphics[width=0.10\linewidth]{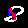}%
\includegraphics[width=0.10\linewidth]{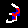}%
\includegraphics[width=0.10\linewidth]{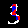}%
\includegraphics[width=0.10\linewidth]{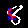}%
\includegraphics[width=0.10\linewidth]{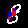}%
\includegraphics[width=0.10\linewidth]{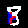}%
\includegraphics[width=0.10\linewidth]{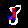} 

\includegraphics[width=0.10\linewidth]{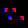}%
\includegraphics[width=0.10\linewidth]{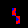}%
\includegraphics[width=0.10\linewidth]{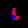}%
\includegraphics[width=0.10\linewidth]{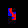}%
\includegraphics[width=0.10\linewidth]{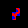}%
\includegraphics[width=0.10\linewidth]{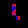}%
\includegraphics[width=0.10\linewidth]{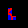}%
\includegraphics[width=0.10\linewidth]{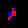}%
\includegraphics[width=0.10\linewidth]{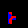}%
\includegraphics[width=0.10\linewidth]{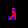}  
\vspace{-6mm}   
\caption{The above figure shows ten randomly selected figures of $3$ and $8$ in the validation set. The first line include the original digits while the second line does not. The selected patches are colored with red if the pixel is activated (white) and blue otherwise.}
\label{fig:mnist}
\end{figure}

\begin{table}[!bt] 
\centering 
\resizebox{.49\textwidth}{!}{
 \begin{tabular}{||c |c c c||} 
 \hline
 &IMDB-Word & IMDB-Sent & MNIST \\
 \hline
 Post-hoc accuracy & 0.90.8 & 0.849 &0.958\\
\hline
 Human accuracy& 0.844 & 0.774& NA\\
 \hline  
 \end{tabular} 
 }
 \vspace{-2mm}
 \caption{Post-hoc accuracy and human accuracy of L2X on three models: a word-based CNN model on IMDB, a hierarchical LSTM model on IMDB, and a CNN model on MNIST.}
\label{table:l2x} 
\end{table} 

\subsection{MNIST} 

The MNIST data set contains $28\times 28$ images of handwritten digits~\cite{lecun1998gradient}. We form a subset of the MNIST data set by choosing images of digits $3$ and $8$, with $11,982$ images for training and $1,984$ images for testing.
Then we train a simple neural network for binary classification over the subset, which achieves accuracy $99.7\%$ on the test data set. The neural network is composed of two convolutional layers of kernel size $5$ and a dense linear layer at last. The two convolutional layers contains $8$ and $16$ filters respectively, and both are followed by a max pooling layer of pool size $2$. We try to explain each sample image with $k=4$ image patches on the neural network model, where each patch contains $4\times 4$ pixels, obtained by dividing each $28\times 28$ image into $7\times 7$ patches. We use patches instead of raw pixels as features for better 
visualization.   

We parametrize the explainer and the variational family with three-layer and two-layer convolutional networks respectively, with max pooling added after each hidden layer. The $7\times 7$ vector sampled from the explainer is upsampled (with repetition) to size $28\times 28$ and multiplied with the input raw pixels.

We use only the post-hoc accuracy for experiment, with results shown in Table~\ref{table:l2x}. The predictions based on 4 patches selected by L2X out of 49 align with those from original images for $95.8\%$ of data. Randomly selected examples with explanations are shown in Figure~\ref{fig:mnist}. We observe that L2X captures most of the informative patches, in particular those containing patterns that can distinguish 3 and 8. 





\section{Conclusion} 

We have proposed a framework for instancewise feature selection via mutual information, and a method L2X which seeks a variational approximation of the mutual information, and makes use of a
Gumbel-softmax relaxation of discrete subset sampling during training. To our best knowledge, L2X is the first method to realize real-time interpretation of a black-box model. We have shown the efficiency and the capacity of L2X for instancewise feature selection on both synthetic and real data sets.

\newpage
\section*{Acknowledgements}

L.S. was also supported in part by NSF IIS-1218749, NIH BIGDATA 1R01GM108341, NSF CAREER IIS-1350983, NSF IIS-1639792 EAGER, NSF CNS-1704701, ONR N00014-15-1-2340, Intel ISTC, NVIDIA and Amazon AWS. We thank Nilesh Tripuraneni for comments about the Gumbel trick. 
\appendix

\section{Proof of Theorem 1}
\paragraph{Forward direction:} Any explanation is represented as a conditional distribution of the feature subset over the input vector. Given the definition of $S^*$, we have for any $X$, and any explanation $\mathcal E:S|X$,
\begin{align*}
\mathbb E_{S|X}\mathbb E_m [&\log{P_m(Y|X_S)}|X] \leq \\
&\mathbb E_m [\log{P_m(Y|X_{S^*(X)})}|X].
\end{align*}
In the case when $S^*(X)$ is a set instead of a singleton, we identify $S^*(X)$ with any distribution that assigns arbitrary probability to each elements in $S^*(X)$ with zero probability outside $S^*(X)$. With abuse of notation, $S^*$ indicates both the set function that maps every $X$ to a set $S^*(X)$ and any real-valued function that maps $X$ to an element in $S^*(X)$.

Taking expectation over the distribution of $X$, and adding $\mathbb E\log P_m(Y)$ at both sides, we have 
\begin{equation*}
I(X_S;Y)\leq I(X_{S^*};Y)
\end{equation*}
for any explanation $\mathcal E:S|X$. 

\paragraph{Reverse direction:} The reverse direction is proved by contradiction. Assume the optimal explanation $P(S|X)$ is such that there exists a set $M$ of nonzero probability, over which $P(S|X)$ does not degenerates to an element in $S^*(X)$. Concretely, we define $M$ as
\begin{equation*}
M=\{x: P(S \notin S^*(x)|X=x) > 0\}.
\end{equation*} 

For any $x\in M$, we have 
\begin{align}\label{eq:inside_M}
\mathbb E_{S|X}\mathbb E_m [&\log{P_m(Y|X_S)}|X=x]<\nonumber\\
&\mathbb E_m [\log{P_m(Y|X_{S^*(x)})}|X=x],
\end{align}
where $S^*(x)$ is a deterministic function in the set of distributions that assign arbitrary probability to each elements in $S^*(x)$ with zero probability outside $S^*(x)$. Outside $M$, we always have 
\begin{align}\label{eq:outside_M}
\mathbb E_{S|X}\mathbb E_m [&\log{P_m(Y|X_S)}|X=x]\leq\nonumber \\
&\mathbb E_m [\log{P_m(Y|X_{S^*(x)})}|X=x] 
\end{align}
from the definition of $S^*$. As $M$ is of nonzero size over $P(X)$, combining Equation \ref{eq:inside_M} and Equation \ref{eq:outside_M} and taking expectation with respect to $P(X)$, we have 
\begin{equation}
I(X_S;Y)< I(X_{S^*};Y),
\end{equation}
which is a contradiction.

\bibliography{explanation}
\bibliographystyle{icml2018}



\end{document}